\newtheorem{theorem}{Theorem}
\title{Training Reinforcement Learning Agents and Humans With Difficulty-Conditioned Generators}
\author{%
  Sidney Tio \\
  \texttt{sidney.tio.2021@phdcs.smu.edu.sg} \\
  \And
   Jimmy Ho \\
  \texttt{jimmyho@smu.edu.sg} \\
  \And
  Pradeep Varakantham \\
  School of Computing and Information Systems\\
  Singapore Management University\\
  \texttt{pradeepv@smu.edu.sg} \\
}
\begin{document}

\maketitle

\begin{abstract}
  We introduce Parameterized Environment Response Model (PERM), a method for training both Reinforcement Learning (RL) Agents and human learners in parameterized environments by directly modeling difficulty and ability. Inspired by Item Response Theory (IRT), PERM aligns environment difficulty with individual ability, creating a Zone of Proximal Development-based curriculum. Remarkably, PERM operates without real-time RL updates and allows for offline training, ensuring its adaptability across diverse students. We present a two-stage training process that capitalizes on PERM's adaptability, and demonstrate its effectiveness in training RL agents and humans in an empirical study.
\end{abstract}
\begin{figure}[!h]      
  \centering
  \includegraphics[keepaspectratio=true,scale=0.25]{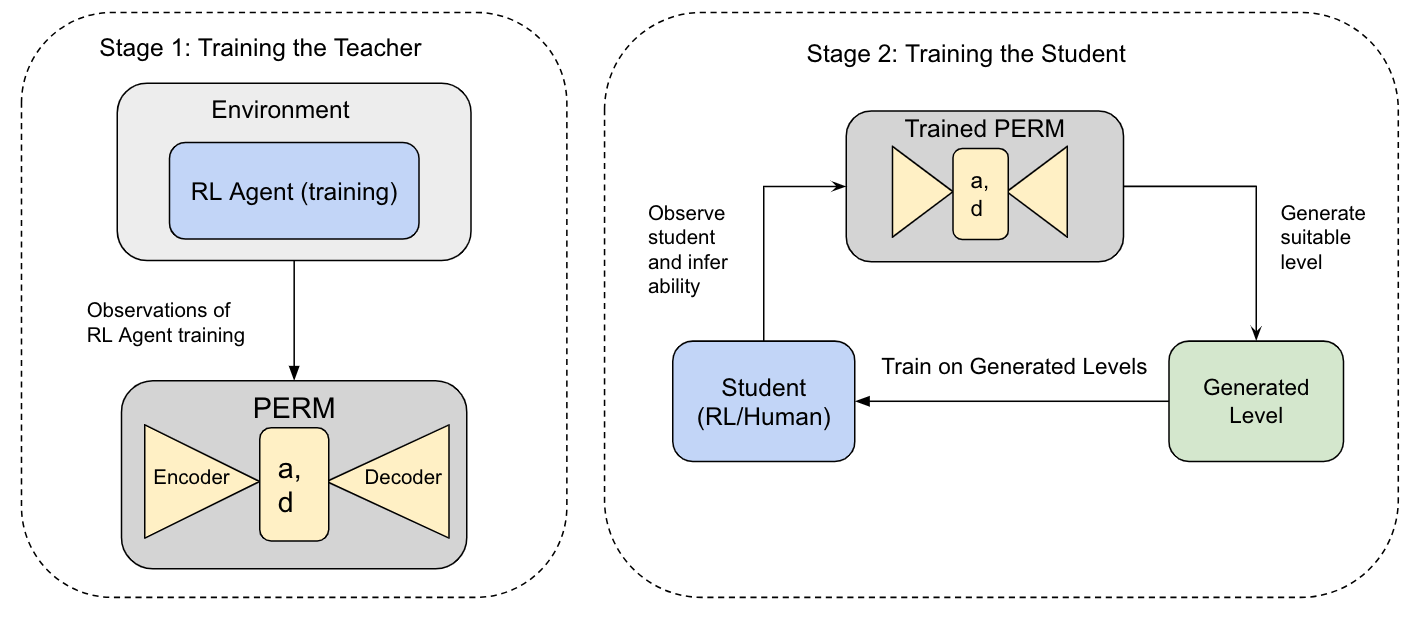}
  \caption{Overview of the proposed 2-stage process. In Stage 1, the IRT-based Parameterized Environment Response Model (PERM) observes a Reinforcement Learning (RL) Agent as it trains in a given environment with randomized levels. During this stage, PERM learns to accurately infer both student ability and level difficulty. In Stage 2, once trained, PERM is deployed to train both artificial and human students. It achieves this by inferring their current ability and providing suitable training levels within the same domain.}
  \label{fig:teaser}
\end{figure}
\section{Introduction}
\textit{Unsupervised Environment Design} (UED, \cite{dennis2020emergent}) has emerged as a promising approach for generating adaptive curricula in a teacher-student paradigm. By identifying useful environments that optimize student learning and taking student performance as feedback, UED tackles the challenge of creating a curriculum that strikes the optimal balance between challenge and current ability, as known to occur in the Zone of Proximal Development (ZPD). However, previous UED (e.g. \cite{parker2022evolving}, \cite{du2022takes}) methods have relied on surrogate objectives or co-learning with other agents, which may not directly address the ZPD. 

\cite{tio2023transferable} proposed using Item Response Theory (IRT, \cite{embretson2013item}) to address this problem. IRT is a mathematical framework used in standardized testing and the design, analysis, and scoring of tests and questionnaires (\cite{hartshorne2018critical},\cite{harlen2001assessment},\cite{luniewska2016ratings}). It allows educators to quantify the ``difficulty" of a test item by modeling the relationship between a test taker's response to the item and their overall ability. In the context of UED, IRT provides a useful framework to understand the difficulty of a parameterized environment relative to the student's ability, which we aim to maximize.

Our work introduces and expands on \textit{Paramterized Environment Response Model}, or PERM \citep{tio2023transferable}. PERM applies the IRT to the UED context and generates curricula by matching environments to the ability of the student, hence providing challenges that are constantly within ZPD. We also demonstrate that this approach is transferable across students, including human students.  

\noindent \textbf{Contributions:} To summarize, our contributions are as follows:
\begin{enumerate}
    \item We introduce PERM \citep{tio2023transferable}, a generative model based on IRT, and demonstrate how it can be applied across different learning context by adapting it to our 2D-game.    
    
    \item We introduce a two-stage process to first exploit RL to collect data for PERM, and secondly deploy PERM to train learners, including real world agents. 
    
    \item Through empirical studies and human subject experiments, we demonstrate the effectiveness of PERM as a teacher algorithm for both RL student agents, and human learners. To our knowledge, this approach is the first algorithm that has demonstrated transferrability in training both artificial agents and real humans. 
\end{enumerate}

\section{Related Works}
\subsection{Item Response Theory}
In Psychology and Education, IRT is employed to model the relationship between a test taker's ability and a specific question characteristic, typically its difficulty. The primary aim is to assess a student's ability based on their responses to questions of varying difficulty levels. In this paper, we focus on the continuous variant of the 1-Parameter Logistic (1PL; \cite{rasch1993probabilistic}) IRT model. In RL settings, we adopt the continuous IRT model to characterize the interactions between an agent and its environment, which are summarized by the cumulative rewards achieved. The continuous 1PL model is formally defined as follows::
\[ 
    p(Z \leq r_{i,j} | a_i, d_j) = \frac{1}{\sqrt{2\pi}} \int_{-\infty}^{a_{i} - d_{j}}\exp\{-\frac{u^2}{2}\}du
\]
\label{eq:ogive}
where $r_{i,j}$ is the response by the $i$-th person, with an ability measure $a_i$, to the $j$-th item, with a difficulty measure $d_j$. We see that this model is equivalent to the cumulative distribution of a standard normal distribution. Therefore, the probability that a student gets an average score on the item is a function of the difference between student ability ${a_i}$ and item difficulty $d_j$.

While earlier works have used different methods to perform inference for IRT, a recent method, VIBO \citep{wu2020variational}, introduces a variational inference approach to estimate IRT. More critically, the formulation of the IRT as a variational inference problem allows us to exploit the learned representation to generate new items.

\subsection{Zone of Proximal Development}
Prior work in UED discusses the ZPD \citep{vygotsky1978mind}, loosely defined as problems faced by the student that are not too easy (such that there is no learning value for the student) and not too difficult (such that it is impossible for the student). 

PAIRED \citep{dennis2020emergent} introduces an adversarial teacher that generates environments to maximize the regret between a protagonist student and an antagonist agent. ZPD is incorporated into the adversarial framework by motivating the teacher agent to provide tasks that are not too challenging to make the antagonist fail but not so trivial that the protagonist achieves high rewards. We note that operationalizing ZPD here requires an additional agent. 

PLR \citep{jiang2021prioritized}, and its newer variants (\cite{parker2022evolving}, \cite{jiang2021replay}), maintains a store of previously seen levels and prioritizes the replay levels where the positive value loss is large. These methods rely on the dissonance between predicted value versus the actual rewards obtained to identify levels with high learning potential. For environments that are outside the ZPD, PLR posits that positive value loss would be small. 

PERM \citep{tio2023transferable} is developed based on the IRT and is augmented with generative capabilities, enabling it to create new items with desired difficulty levels. Thus far, while PERM has demonstrated capabilities in training RL agents in OpenAI's Gym \citep{brockman2016openai} environments, it has yet to replicate the same efficacy in training humans. We provide a brief overview of PERM in the Methods section.

In summary, teacher-student curriculum generation approaches have predominantly focused on ZPD in some aspect, but have relied on surrogate objectives without directly measuring environment difficulty or student ability. PERM, on the other hand, addresses these problems by directly inferring these features, making it a suitable candidate for transfer between students and for training for real world agents. 

\section{Method}
In this section, we highlight the key components of our training framework that utilizes PERM, and how we adopt it to train virtually any student in any environment. 

\subsection{Preliminaries}

For the entirety of this work, we present our work in the context of a proprietary 2D obstacle course game, named Jumper, created with Unity \citep{juliani2020}. The goal of the player is to control a character with their computer keyboards to navigate cross spiked pathways to reach the end goal. If players were to jump into the spikes, the player would restart from the beginning. A sample level could be seen in Figure \ref{fig:level}. 

Under the UED framework, teacher algorithms interact with the student by selecting the environment parameters used to generate the training levels. For Jumper, the parameters are \textit{spike density} - the number of spikes in a given level, and \textit{height variance} - the degree to which the terrain varies. Further details of the environment can be found in the Appendix \ref{app:env}. 

\subsection{Teacher Model Development}
PERM draw parallels from UED to IRT by characterizing each environment parameter $\lambda$ as an item which the student agent with a policy $\pi_t$ 'responds' to by interacting and maximizes its own reward $r$. Specifically, each student interaction with the parameterized environment yields a tuple $(\pi_t,\lambda_t, r_t)$, where $\pi_t$ represents the student policy at $t$-th interaction, and achieves reward $r_t$ during its interaction with the environment parameterized by $\lambda_t$. We then use a history of such interactions to learn latent representations of student ability $a \in \mathbb{R}^n$ and item difficulty $d \in \mathbb{R}^n$, where $a \propto r$ and $d \propto \frac{1}{r}$. In this formulation, $\pi_t$ at different timesteps are seen as students independent of each other.

\subsubsection{Learning Latent Representations of Ability and Difficulty}
PERM uses a Variational Inference problem \citep{kingma2013auto} formulation to learn latent representation of any student interaction with the environment. More critically, PERM exploits the amortization of the item and student space, which allows it to scale from discrete observations of items, to a continuous parameter space such as the environment parameters in UED. From here, we drop the subscript for the notations $a$, $d$, and $r$ to indicate our move away from discretized items and students, as originally formulated in IRT.

\subsubsection{Generating New Levels for Curricula}
The objective of VIBO is to learn the latent representation of student ability and difficulty of items. In order for us to generate the next set of environment parameters $\lambda_{t+1}$for the student to train on, PERM include an additional decoder to generate $\lambda$ given a desired difficulty estimate $d$.  

PERM makes a core assumption that optimal learning takes place when the difficulty of the environment matches the ability of the student. In the continuous response model given in Eq. \ref{eq:ogive}, we see that when ability and difficulty is matched (i.e. $a = d$), the probability which the student achieves a normalized average score $r = 0$ is 0.5. This is a useful property to operationalize ZPD, as we can see that the model estimates an equal probability of the student overperforming or underperforming. 

Training is initialized by uniformly sampling across the range of environment parameters. After each interaction between the student and the environment, PERM estimates the ability $a_t$ of the student given the recent episodic rewards and parameters of the environment. PERM then generates the parameters of the next environment $\lambda_{t+1} \sim p_\theta(\lambda|d_{t+1})$ where $d_{t+1} = a_t$. \\

We refer motivated readers to the Appendix and  \cite{tio2023transferable} for more details. 

\subsection{Deploying PERM as a Training System: A Two-Stage Process}
Equipped with a understanding of PERM's role and capabilities, we now describe a two-stage process which we propose as a generalized framework for training. The process can be visualized in Figure \ref{fig:teaser}.

\subsubsection{Stage 1: Collecting Data to Train PERM}\label{sec:data_collection}
One of the primary challenges facing IRT-based solutions lies in the availability of item-student interaction data. While established standardized tests possess abundant data due to its rich history and wide student base \citep{harlen2001assessment}, applying IRT to new domains remains problematic. Commonly referred to as the cold-start problem \citep{bassen2020reinforcement}, such models often provide a sub-optimal experiences during the infant stages, potentially leading to poor user experience. 

In contrast, training RL agents in simulated environments provides a substantial advantage by offering access to a virtually limitless pool of item-student interaction data. This advantage enables us to address these challenges effectively. 

In the first stage of our approach, we initialize new RL agents to gather the necessary item-student interaction data in a given environment. The RL Agents would be exposed to a variety of levels, all of which were generated using Domain Randomization \citep{tobin2017domain}. We then collect their performance results and the associated level parameters used to generate the levels. 

After the RL agent has completed its training, we train PERM with the entire history of performance-environment parameters to yield a model that is capable of inferring student ability and level difficulty.

\subsubsection{Stage 2: Deploying PERM to Train Others}
In this stage, we implement PERM as the teacher algorithm, in accordance with the UED framework.

During this phase, the pre-trained PERM actively observes the student's interactions with the learning environment, allowing it to make accurate assessments of the student's current ability level. Based on this assessment, PERM then suggests the next set of environment parameters that are deemed appropriate for the student's current stage of learning.

Importantly, since PERM was pre-trained prior to the initiation of the student's training, we effectively circumvent the cold-start problem. It's noteworthy that even when human students commence their training with varying levels of competency, PERM can quickly infer their abilities using the initial training interactions and subsequently offer appropriately challenging content tailored to each student.

\begin{figure*}[!ht]
    \includegraphics[width=\linewidth, height=5cm]{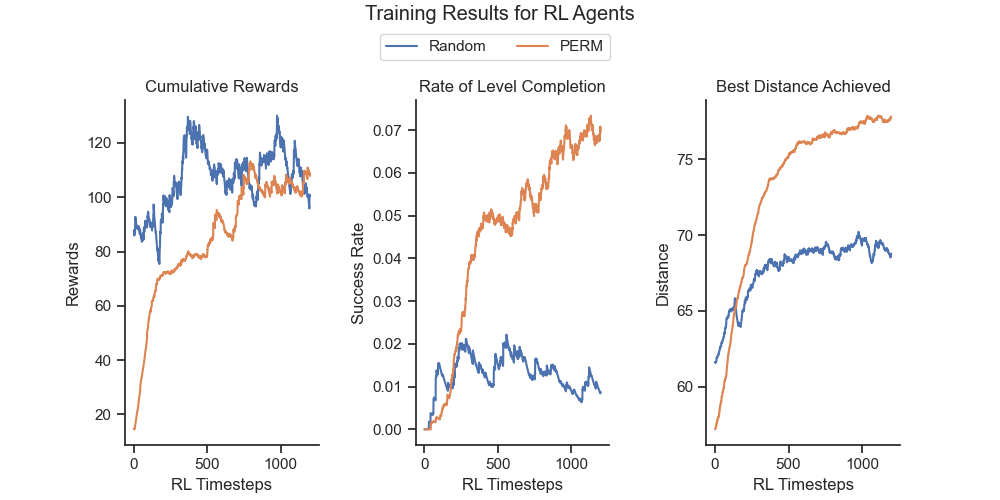}
    \caption{Training results of RL Agents trained under PERM (orange) and a random curricula (blue). Left: Agents trained under PERM achieved comparable training rewards, despite training under more difficult levels. Middle: Agents trained under PERM were more likely to complete the level and reach the final goal during training. Right: Agents trained under PERM travelled deeper into the level than the counterparts in the random condition.}
    \label{fig:rl_analysis}
\end{figure*}
\subsection{Evaluation with RL Students}
\label{sec:pre-study}
To determine if PERM applies well to our Jumper environment, we conducted a study in which we use PERM to train a student RL agent.  

We first train a Jumper-tuned version of PERM with the method highlighted in \hyperref[sec:data_collection]{Stage 1}. In this development phase, we obtained a total of 14506 environment-student interaction data. Thereafter, we deploy the trained PERM as a teacher algorithm to a new PPO \citep{schulman2017proximal} RL student trained using Unity's \texttt{ml-agents} package \citep{juliani2020}. We compare the results of a RL student trained under PERM with one under a random curricula. The results are shown in Figure \ref{fig:rl_analysis}. 

Based on the obtained results, it is evident that the adoption of an IRT-driven curriculum with the PERM teacher yields remarkable outcomes for RL agents, surpassing the performance achieved by the random curriculum. Notably, RL agents trained using the IRT-driven curriculum exhibit a higher level of proficiency in completing levels and, on average, traversed deeper into these levels compared to their counterparts trained using the random curriculum. These outcomes are noteworthy considering that PERM continually challenges the student by evolving the levels in the same pace. 

\section{Evaluation with Human Subjects}
In this section, we evaluate the effectiveness of a curricula designed by PERM and compare it against a randomly designed curricula and a control condition for human students. In this design, participants would be undergoing training to play the Jumper game. 

\subsection{Experimental Design}
Participants were recruited via an invitation posted to a large online chat group established to connect researchers with a large subject pool. In order to mitigate prior gaming experiences from confounding the performance results, we queried participants on their familiarity with 2D side-scrolling\footnote{Side-scrolling is a common term used in video games to describe games in which the camera view follows the character as it traverses from left to right. In our question, we gave \textit{Super Mario Bros} as an example.} games using a Likert scale and balanced the condition assignment accordingly. 

First, participants were introduced to Jumper, and given visual instructions on how to interact with it. Participants were then given a trial to familiarize themselves with Jumper keyboard controls. Once the trial was completed, participants' training would commence. After training, participants would be tested on an unseen level. Participants were compensated upon successful completion of the exit questionnaires and training. 

Prior to the training, participants were randomly assigned to 1 of 3 conditions that would determine the type of training they received: \textit{No Training} - participants in this control condition did not receive any training and immediately went to the test stage after the trial; \textit{Random Curriculum} - participants would receive randomly generated training levels; \textit{PERM} - we deploy the same model used in the \hyperref[sec:pre-study]{RL study} to judge participants on their performance on training levels, and generate the next level according to the inferred ability level of the student. The first level assumes that the participant is of average ability and generates the first level accordingly. For participants in the Random and PERM group, they would have received 10 different levels, with a maximum of 15 attempts per level. Upon reaching the goal, or maximum attempts, the level would terminate and the next training level is generated. Similarly, for the trial and the final test, participants were given up to 15 attempts. 

Further details on the participants, and the procedures of the training can be found in the Appendix. 

\section{Results}
To establish the efficacy of a PERM-driven curriculum, we analyzed the training trial with the following research questions in mind: 
\begin{description}
    \item [R1:] How does an adaptive curriculum generated by PERM affect the completion rate and performance on the final test?
    \item [R2:] What were the differences between the levels generated by the random curricula and PERM?
\end{description}

For all statistical tests described, we used $\alpha = 0.05$ and corrected for multiple comparisons with the Bonferroni correction.

\subsection{R1: Training Outcomes}
The first analysis seeks to investigate how PERM-guided training affects the performance on the final test, and the completion rate of the test. 

\subsubsection{Method}
\begin{figure*}[!t]
    \includegraphics[width=\linewidth,height=5cm]{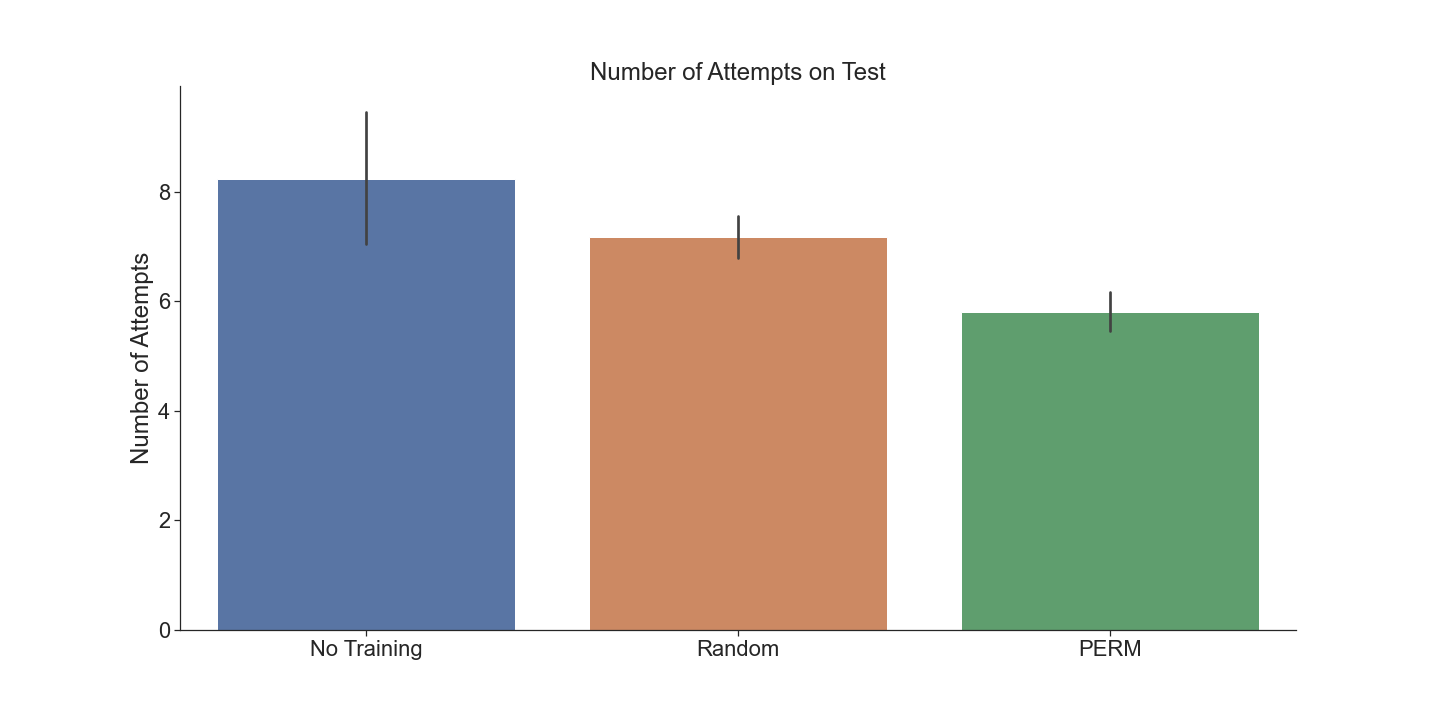}
    \caption{Participants trained by PERM completed the test with lesser number of attempts, compared to the other conditions ($p < 0.01$)}
    \label{fig:511_1}
\end{figure*}

We first compared the number of attempts required to complete the final test. Next, we compared their completion rate. We also compared participant's self-reported familiarity with side-scrolling games against their completion rates. A successful completion meant that participants took lesser than 15 attempts on the final test. Lastly, we analyzed the duration it took per attempt for them to complete. We perform the above analysis based on the assumption that more competent participants would complete the test with lesser attempts, with a shorter duration. We used Student's t-test to compare the duration and the attempts made in the final test, and chi-squared test of goodness of fit to compare completion rates.  
\begin{figure*}[!t]
    \centering    \includegraphics[keepaspectratio=true,scale=0.35]{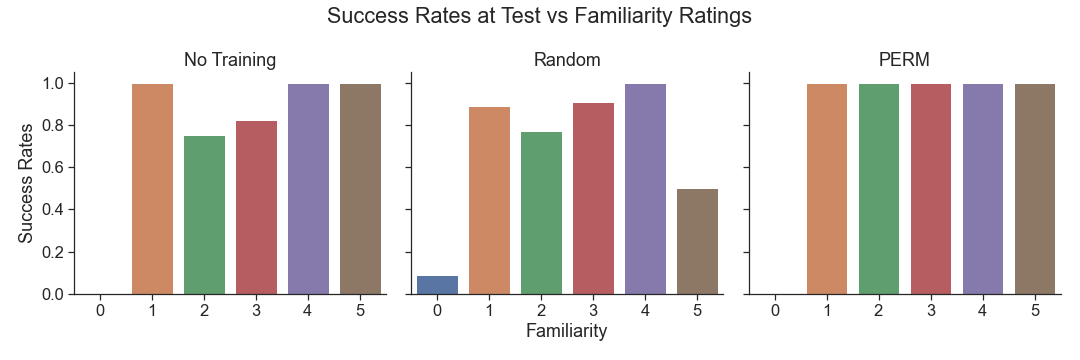}
    \caption{Participant's self-report of their familiarity with 2D games, against their completion rates in the final test. 0 represented "No Experience at all" while 5 represented "Highly Experienced".}
    \label{fig:511_2}
\end{figure*}

\subsubsection{Results}
\begin{figure*}[!t]
    \includegraphics[keepaspectratio=true,scale=0.25]{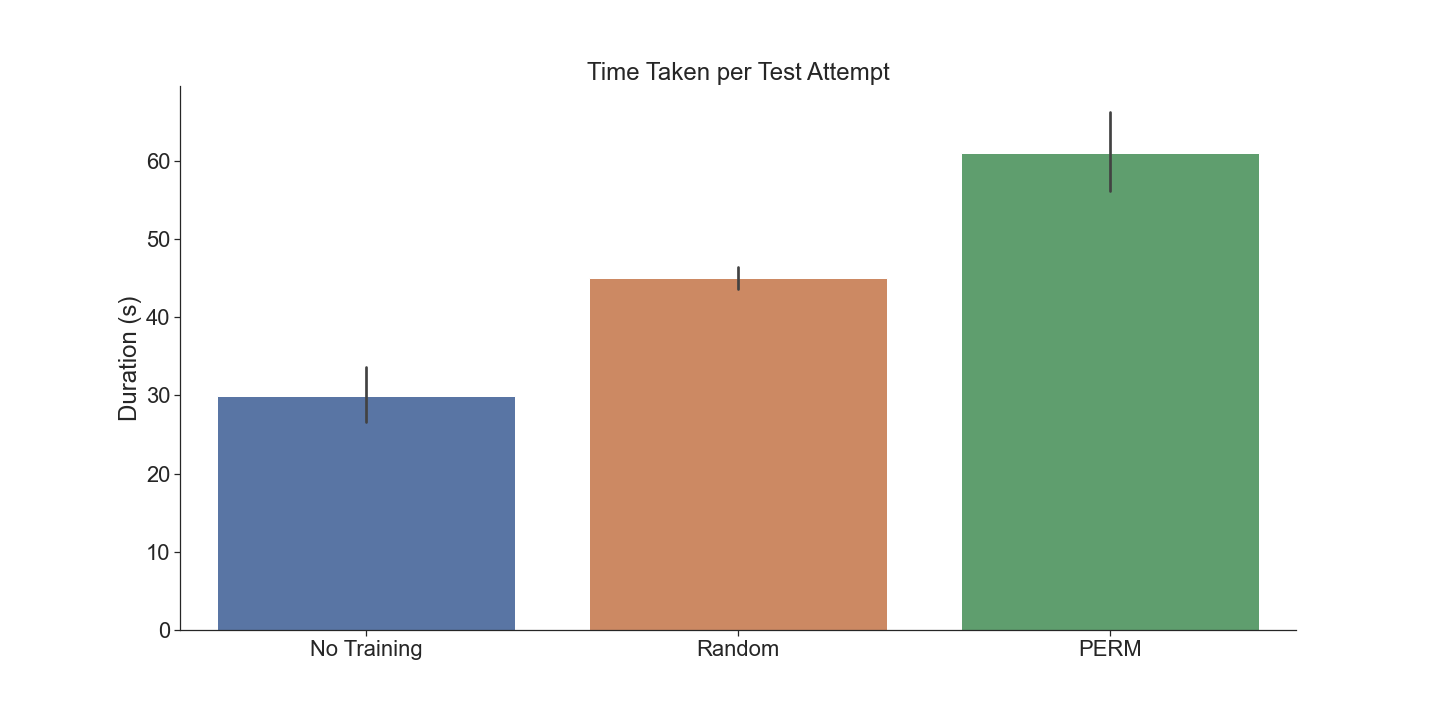}
    \caption{Participants under PERM took a longer time per attempt during the test ($p < 0.01$).}
    \label{fig:511_3}
\end{figure*}
The number of attempts and the completion rate of the tests are presented in Figure \ref{fig:511_1} and \ref{fig:511_2}  respectively. As expected, participants who received any form of training (Random or PERM) performed better than the control group (No training). More critically, students trained under PERM ($\mu = 5.8, \sigma = 4,86$) required significantly lower number of attempts to complete the test, as compared to the no training ($\mu = 8.225, \sigma = 5.46, p < 0.01$) and the random curriculum ($\mu = 7.17, \sigma = 5.49, p < 0.01$).  
Additionally, participants under the PERM were more likely to complete the test (i.e. reach the goal with less than 15 attempts), regardless of prior experience with games, than the other conditions. Figure \ref{fig:511_2} depicts the completion rate of each condition, compared to their self-reported prior experience. The effect of curriculum was found to be significant, i.e. the completion rates were not equally distributed amongst the 3 conditions ($\chi^2 (2, N=230) = 9.24, p < 0.01$).

Lastly, the duration per attempt for groups under PERM ($\mu = 61.02, \sigma = 66.41$) were significantly longer than that of the random curricula ($\mu = 45.01, \sigma = 19.68, p < 0.01$) and control condition ($\mu = 29.86, \sigma = 16.42, p < 0.01$). The average duration is plotted in Figure \ref{fig:511_3}.

\subsubsection{Discussion}
Collectively, these findings suggest that students trained with PERM were not only more likely to succeed on the test but also required fewer attempts to do so. Crucially, this positive impact of PERM on students remains consistent across individuals with diverse levels of prior experience with similar games. This consistency underscores the effectiveness of the adaptive curriculum implemented by PERM, demonstrating its capacity to benefit participants regardless of their varied backgrounds.

We were surprised that students under PERM had took significantly longer per attempt to complete the test. This observation hints at distinct behavioral differences among the learners, especially those exposed to higher difficulty levels. It's worth highlighting that participants were not explicitly informed that their performance was being evaluated based on the speed of level completion. This absence of explicit information could have influenced the more deliberate approach adopted by students exposed to the PERM framework.

\subsection{R2: Characteristics of Generated Levels}
This analysis investigates the levels generated by both the random curriculum and PERM, and how they evolve over time with the participants leading into the test. 

\subsubsection{Method}
\begin{figure*}[!t]
    \includegraphics[width=\linewidth,height=5cm]{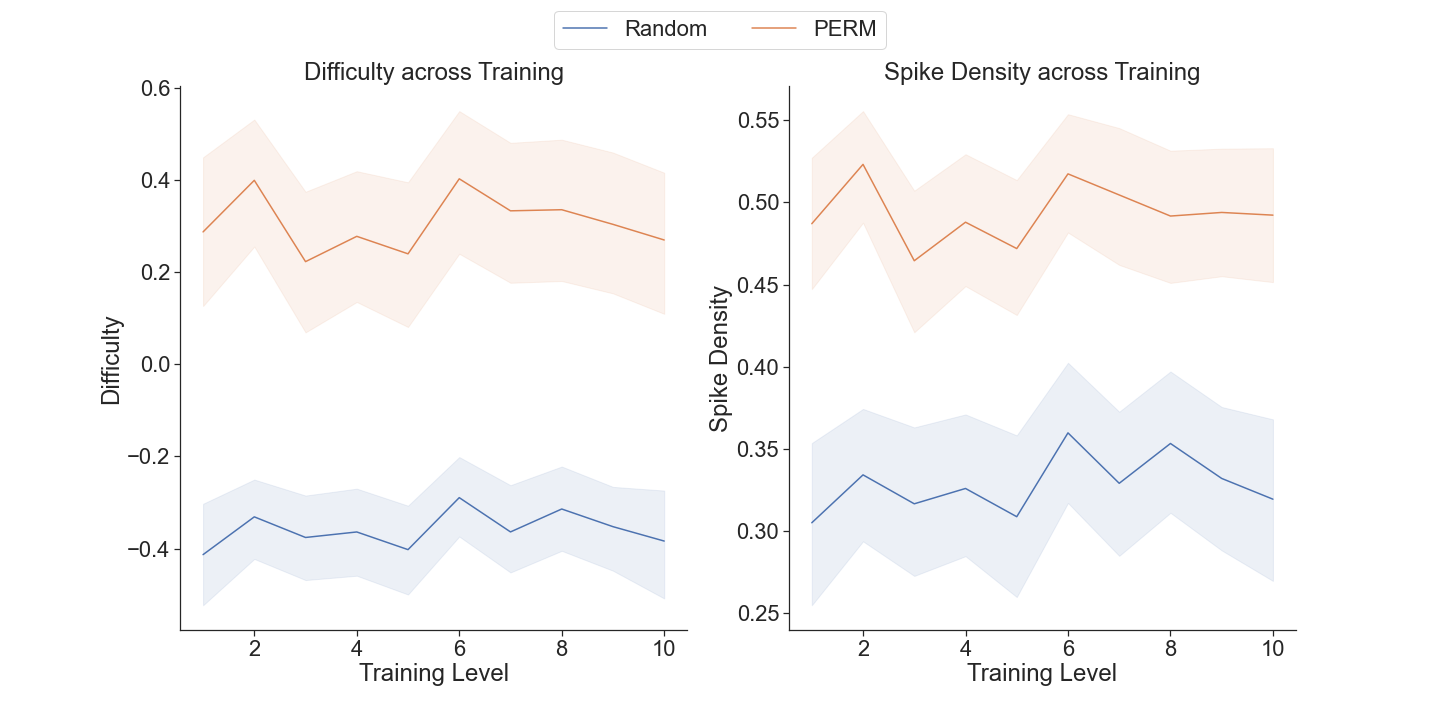}
    \caption{Participants trained by PERM (orange) were exposed to more difficult environments. Left: Difficulty of training levels, estimated by PERM. Right: Spike Density of the levels generated.}
    \label{fig:521}
\end{figure*}
During the training phase, we collected data on the types of level generated. We then used PERM to infer the difficulties of the environments generated, and plotted the average difficulty for each group over the course of the training. We also present the average \textit{spike density} across the training to illustrate its relationship with difficulty. 

\subsubsection{Results}
\begin{figure*}[!t]
    \includegraphics[width=\linewidth,height=5cm]{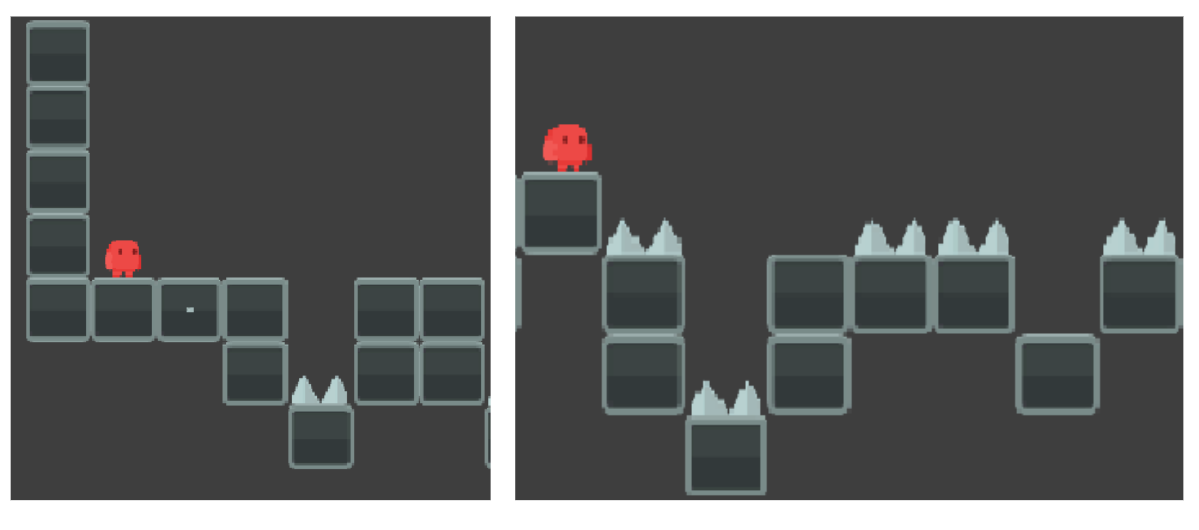}
    \caption{Possible segments of levels generated by PERM. The easy level (left) has lesser spikes and lesser variation in the terrain. In contrast, players have to navigate uneven terrains and jump across more spikes in the difficult level (right).}
    \label{fig:levelgen}
\end{figure*}
From Figure \ref{fig:521}, we found that participants under PERM, on average, were consistently exposed to environments of higher spike density, and consequently higher difficulty levels than those in the random curricula. We present an example of an ``easy'' and ``difficult'' environment generated by PERM in Figure \ref{fig:levelgen}. 

\subsubsection{Discussion}
According to PERM's assessment, the levels generated during PERM's training consistently exhibited higher difficulty levels compared to the random curricula. These findings indicate that students in the PERM condition consistently faced challenging scenarios and were exposed to greater levels of difficulty throughout their training. Consequently, this rigorous training appeared to be beneficial, as it translated well when students encountered an unseen and intricate level during the final test. It is important to emphasize that this final test was designed to be non-linear, requiring participants to execute more complex maneuvers in order to reach the goal. However, the exposure to and training under more challenging conditions in the PERM framework seemed to equip participants with the skills needed to perform well in this demanding test scenario.

Contrary to our initial expectations of logarithmic training curves with gradual growth followed by plateauing, participants in the PERM condition encountered challenging levels early, leading to a performance ceiling. This limitation arises from the inherent simplicity of the Jumper domain, which allows for swift comprehension of game mechanics and an upper threshold of difficulty that many participants under PERM reached. 

Nevertheless, we were encouraged to find that PERM was able to quickly infer the ability levels of learners, and present our participants with challenging levels early in their training. This is as opposed to the random curriculum, which would have generated trivial levels irrespective of student ability, thus wasting a training opportunity.

Finally, we conducted a visual inspection of the game levels (Figure \ref{fig:levelgen}) to confirm that PERM's characterization of 'easy' and 'difficult' aligns with our original design intentions for Jumper. This examination revealed that 'easy' levels exhibited a flatter terrain with fewer spikes, while 'difficult' levels presented players with an abundance of spikes and challenging, uneven pathways.

\section{Conclusion and Future Work}
We adapted PERM, an IRT-based generative model, to a 2D game and demonstrated its capabilities in training both RL Agents and human learners. For human learners, we conducted a user study to evaluate the effects of PERM-guided training in the Jumper environment over a random curricula, or no training at all. We are pleased to share the positive results observed for the PERM curriculum. 

In addition, we present a process in which RL could be used to collect data and subsequently train PERM to make inferences on level difficulty and student ability. Thereafter, the trained PERM could be deployed  as a teacher algorithm to provide adaptive training to virtually any student, in any parameterized environment. This work we demonstrated has massive potential to be generalized to other domains, such as complex video games as well as high school education. We present this work as a step towards leveraging recent AI techniques to support the establishment of new training systems that can be applied to real-world agents.
\medskip

{
\small
\bibliographystyle{abbrvnat}
\bibliography{references}
}
\appendix
\section{Jumper Environment}
\label{app:env}
\begin{figure}[!ht]
    \centering
    \includegraphics[width=\linewidth]{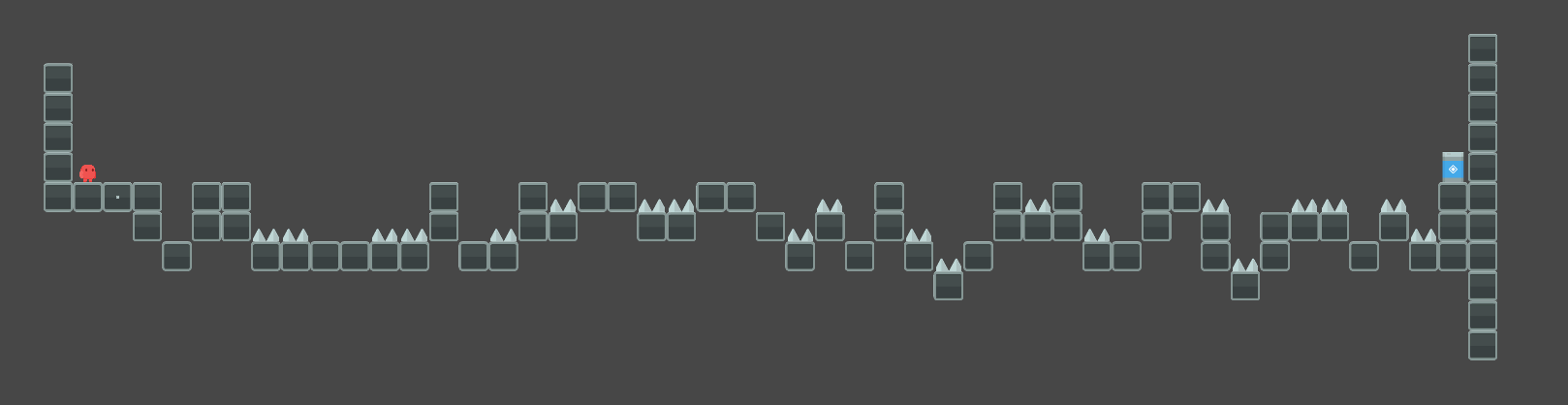}
    \caption{A sample of a Jumper level generated. Players would have to control the red character to jump across the spiked path to reach the blue goal at the far right of the level.}
    \label{fig:level}
\end{figure}
\subsection{Generation Details}
\label{appendix:generation}
This section details the Jumper environment, created by Unity \citep{juliani2020}. For training levels, teacher algorithms are empowered to control the parameters that will be used to generate the levels. We provide the description of the parameters below:
\begin{itemize}
    \item \textit{spike density} - the number of spikes in a given level. Teacher algorithms are to select a single scalar ranging from $[0.0,1.0]$, representing the probability that a spike would be generated for a given tile.  
    
    \item \textit{height variance} -  the degree to which the terrain varies. This is represented by a vector of size 4, with each scalar denoting the probability associated with the generation of a tile at a specific height. All values in the vector will sum to 1.0. In this context, the various tile heights correspond to distinct values, namely -1, 0, 1, and 2, which directly correspond to the respective y-coordinates of the tiles within the 2D environment.
    
\end{itemize}

Once the parameters have been determined, our Unity environment undergoes a sampling procedure. This procedure involves drawing samples from both a Bernoulli distribution and a Categorical Distribution, with these distributions being parameterized by the variables \textit{spike density} and \textit{height variance}, respectively. This sampling process is iteratively applied to each tile within the level, resulting in a total of 48 iterations.

\subsection{Final Test Level}
\includegraphics[width=\linewidth]{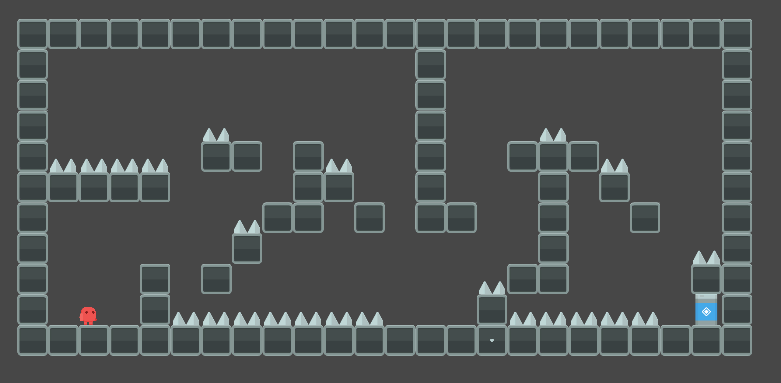}
    \label{fig:testlevel} \\
The above image is the Jumper test level that all participants were exposed to at the end of the training. Note that our algorithm highlighted in \ref{appendix:generation} is not able to produce levels of this complexity, and thus represents a level that is out-of-distribution with respect to the training set. 

\subsection{Limitations}
\label{sec:limitations}

The simplicity of the Jumper domain, intentionally designed for broad applicability to non-expert audiences and ease of verification, led to a ceiling effect in level difficulties. This deliberate choice allowed us to showcase the generalizability of our approach to both artificial and human learners, representing a step towards a general teacher algorithm for any parameterized domain. Jumper serves as an initial step towards generative models in adaptive training systems, and our future work aims to replicate these positive results in more complex domains, such as high school syllabus or commercial video games.

\section{PERM Proof}
We state and prove the revised PERM objective based on Variational Inference in the following theorem. We use notation consistent with the Variational Inference literature, and refer the motivated reader to \cite{kingma2013auto} for further reading.
\begin{theorem}
    Let $a$ be the ability for any student, and $d$ be the difficulty of any environment parameterized by $\lambda$. Let $r$ be the continuous response from the student on the environment. If we define the PERM objective as 

    \begin{align} \label{eq:PERM}
        \mathcal{L}_{PERM} & \triangleq  \mathcal{L}_{recon_r} + \mathcal{L}_{recon_{\lambda}} + \mathcal{L}_{A} + \mathcal{L}_{D} 
    \intertext{and assume the joint posterior factorizes as follows:} \\
        q_{\phi}(a, d | r, \lambda) &= q_{\phi}(a|d, r, \lambda) q_{\phi}(d| r, \lambda) 
    \end{align}
    then $\log p(r) + \log(\lambda) \geq \mathcal{L_{PERM}}$. $\mathcal{L_{PERM}}$ is a lower bound of the log marginal probability of a response $r$.

\end{theorem}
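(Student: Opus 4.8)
The statement is an evidence-lower-bound (ELBO) claim, so the plan is to run the standard variational-inference derivation specialized to the IRT-style generative structure that PERM assumes, reading the left-hand side as the log joint evidence $\log p(r, \lambda)$. First I would fix the generative model implied by the setup: ability $a$ and difficulty $d$ are latent with priors $p(a)$ and $p(d)$, the continuous response is generated through the 1PL ogive of Eq.~\ref{eq:ogive} as $p(r \mid a, d)$, and the environment parameters are produced by the added decoder as $p(\lambda \mid d)$, depending on difficulty alone and not on ability. Under an independent prior this gives the factorization $p(r, \lambda, a, d) = p(r \mid a, d)\, p(\lambda \mid d)\, p(a)\, p(d)$, which encodes exactly the modeling assumptions of PERM.

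Next I would start from $\log p(r, \lambda) = \log \int\!\!\int p(r, \lambda, a, d)\, da\, dd$, insert the variational posterior $q_\phi(a, d \mid r, \lambda)$ by multiplying and dividing inside the integral, and rewrite it as $\log \mathbb{E}_{q_\phi}[\, p(r, \lambda, a, d) / q_\phi(a, d \mid r, \lambda) \,]$. Applying Jensen's inequality (concavity of $\log$) pushes the logarithm inside the expectation and yields the ELBO $\mathbb{E}_{q_\phi}[\log p(r, \lambda, a, d) - \log q_\phi(a, d \mid r, \lambda)]$, which already establishes the inequality direction claimed.

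The remaining work is bookkeeping. Substituting both the generative factorization and the assumed posterior factorization $q_\phi(a, d \mid r, \lambda) = q_\phi(a \mid d, r, \lambda)\, q_\phi(d \mid r, \lambda)$, I would split the integrand by linearity of expectation into four pieces. The terms $\mathbb{E}_{q_\phi}[\log p(r \mid a, d)]$ and $\mathbb{E}_{q_\phi}[\log p(\lambda \mid d)]$ are the reconstruction terms $\mathcal{L}_{recon_r}$ and $\mathcal{L}_{recon_\lambda}$. The terms $\mathbb{E}_{q_\phi}[\log p(a) - \log q_\phi(a \mid d, r, \lambda)]$ and $\mathbb{E}_{q_\phi}[\log p(d) - \log q_\phi(d \mid r, \lambda)]$ become the prior-matching terms $\mathcal{L}_A$ and $\mathcal{L}_D$; using the posterior factorization to marginalize, these collapse into $\mathcal{L}_D = -\mathrm{KL}(q_\phi(d \mid r, \lambda)\,\|\,p(d))$ and $\mathcal{L}_A = -\mathbb{E}_{q_\phi(d \mid r, \lambda)}[\mathrm{KL}(q_\phi(a \mid d, r, \lambda)\,\|\,p(a))]$, recovering the definition of $\mathcal{L}_{PERM}$.

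The main obstacle is structural rather than analytical: justifying the conditional-independence assumptions in the generative factorization, in particular that $\lambda$ is conditionally independent of $a$ given $d$, and correctly matching the nested expectation in $\mathcal{L}_A$, where $a$ is integrated under a posterior that is itself conditioned on $d \sim q_\phi(d \mid r, \lambda)$, to the intended per-term definition. Getting that nesting and the sign conventions on the KL terms right is what makes the four summands line up exactly with the stated $\mathcal{L}_{PERM}$; the Jensen step is then routine.
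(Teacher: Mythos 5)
Your proposal is correct and follows essentially the same route as the paper's proof: expand the evidence $\log p_{\theta}(r) + \log p_{\theta}(\lambda)$, apply Jensen's inequality under $q_{\phi}(a, d \mid r, \lambda)$, and use the generative factorization $p(r, \lambda, a, d) = p(r \mid a, d)\, p(\lambda \mid d)\, p(a)\, p(d)$ together with the assumed posterior factorization to split the bound into the four terms $\mathcal{L}_{recon_r} + \mathcal{L}_{recon_{\lambda}} + \mathcal{L}_{A} + \mathcal{L}_{D}$. If anything, you are more explicit than the paper, which leaves the conditional independence of $\lambda$ from $a$ given $d$ and the nested-expectation structure of $\mathcal{L}_{A}$ implicit.
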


\begin{proof}
    Expand the marginal and apply Jensen's inequality:
    \begin{align*}
            \log p_{\theta}(r) + \log p_{\theta}(\lambda) \ge \nonumber 
             \mathbb{E}_{q_{\phi}(a, d | r)} [\log \frac{p_{\theta}(r,a,d, \lambda)}{q_{\phi}(a, d | r, \lambda)}] \\
            = \mathbb{E}_{q_{\phi}(a, d | r}) [\log {p_{\theta}(r| a,d)}] \\
            + \mathbb{E}_{q_{\phi}(a, d | r)} [\log {p_{\theta}(\lambda| d)}] \\
            + \mathbb{E}_{q_{\phi}(a, d | r)} [\log \frac{p_(a)}{q_{\phi}(a| d , r, \lambda)}] \\
            + \mathbb{E}_{q_{\phi}(a, d | r)} [\log \frac{p_(d)}{q_{\phi}(d | r, \lambda)}] \\
            = \mathcal{L}_{recon_r} + \mathcal{L}_{recon_{\lambda}} + \mathcal{L}_{A} + \mathcal{L}_{D}
    \end{align*}
Since $\mathcal{L}_{PERM}= \mathcal{L}_{recon_r} + \mathcal{L}_{recon_{\lambda}} + \mathcal{L}_{A} + \mathcal{L}_{D}$ and KL divergences are non-negative, we have shown that $\mathcal{L}_{PERM}$ is a lower bound on $\log p_{\theta}(r) + \log p_{\theta}(\lambda)$.

For easy reparameterization, all distributions $q_{\phi}(.|.)$ are defined as Normal distributions with diagonal covariance. 
\end{proof}

\section{Ability levels Across Time}
We investigate if abilities estimated by PERM is indicative of final performance on the test. 
\subsection{Method}
\begin{figure}[!h]
    \centering
    \includegraphics[keepaspectratio=true,scale=0.25]{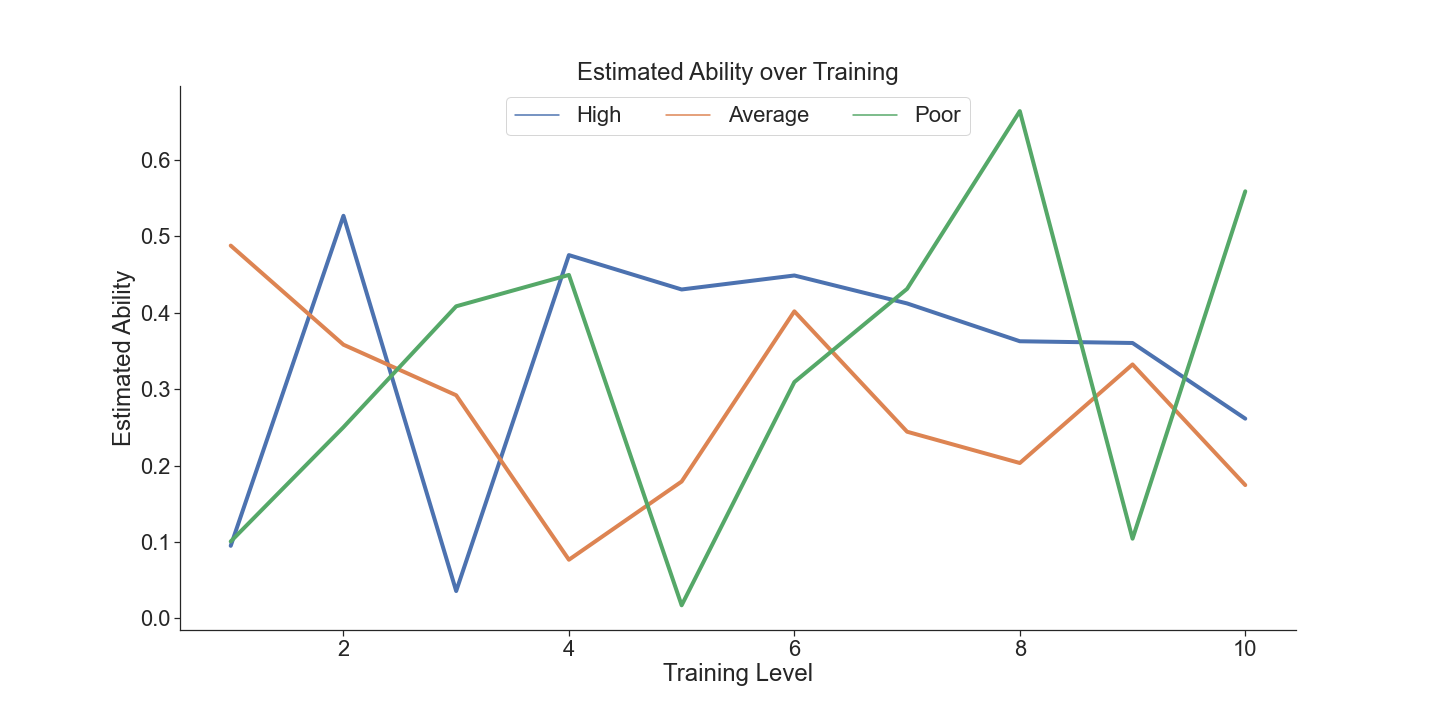}
    \caption{Abilities estimated by PERM was predictive of performance in the final test: High performers were estimated to have higher ability than average-performing participants in the later half of the training. Ability estimates for poor-performing participants seemed to have fluctuations.}
    \label{fig:531}
\end{figure}
We first categorize the top 25\%, middle 50\%, and bottom 25\% participants into high, average, poor performers based on their test performacne, and plot their average ability inferred by PERM across the training. 

\subsection{Results}
Figure \ref{fig:531} provides the ability levels of the participants under the PERM condition, across the training phase. Error bars are excluded for clarity. We observe that high performers tackled more difficult levels than those average performers, especially in the later half of the training. In contrast, poor performers had large fluctuations in the difficulty of their assigned levels, even occasionally assigned more difficult levels than the high performers. 

\subsection{Discussion}
This results suggest that the ability levels inferred by PERM can be predictive of the final performance and, by extension, the true ability of the student in real time, for average-high performers. We were further encouraged by PERM's capabilities in tracking student's ability, with few observations. 

Participants with poor performance showed significant ability level fluctuations due to PERM's reliance on the most recent performance, leading to an overestimation of their abilities in easy levels, and subsequent recommendation of challenging levels.


\end{document}